\begin{document}

\title{Concentration bounds for CVaR estimation:\\[0.5ex] The cases of light-tailed and heavy-tailed distributions} 

\author[1]{Prashanth L. A.}
\affil[1]{\small  Department of Computer Science and Engineering,
  Indian Institute of Technology Madras}
\author[2]{Krishna Jagannathan}
\affil[2]{Department of Electrical Engineering \\ Indian Institute of Technology Madras}
\author[3]{Ravi Kumar Kolla}
\affil[3]{ABInBev, Bangalore}
\date{}
\maketitle


\begin{abstract}
Conditional Value-at-Risk (CVaR) is a widely used risk metric in applications such as finance. We derive concentration bounds for CVaR estimates, considering separately the cases of light-tailed and heavy-tailed distributions. In the light-tailed case, we use a classical CVaR estimator based on the empirical distribution constructed from the samples. For heavy-tailed random variables, we assume a mild `bounded moment' condition, and derive a concentration bound for a truncation-based estimator. Notably, our concentration bounds enjoy an exponential decay in the sample size, for heavy-tailed as well as light-tailed distributions. To demonstrate the applicability of our concentration results, we consider a CVaR optimization problem in a multi-armed bandit setting. Specifically, we address the best CVaR-arm identification problem under a fixed budget. We modify the well-known successive rejects algorithm to incorporate a CVaR-based criterion. Using the CVaR concentration result, we derive an upper-bound on the probability of incorrect identification by the proposed algorithm.   
\end{abstract}


\section{Introduction}
\label{sec:intro}
In applications such as portfolio optimization in finance,  the quality of a portfolio is not satisfactorily captured by the expected value of return. Indeed, in such applications, a more risk-sensitive metric is desirable, so as to capture typical losses in the case of adverse events.  Value-at-Risk (VaR) and Conditional-Value-at-Risk (CVaR) are two risk-aware metrics, which are widely used in applications such as portfolio optimization and insurance. VaR at level $\alpha \in (0, 1)$ conveys the maximum loss incurred by the portfolio with a confidence of $\alpha.$ In other words, the portfolio incurs a loss greater than VaR at level $\alpha$ with probability $1 - \alpha.$ In turn, CVaR at level $\alpha \in (0, 1)$ captures the expected loss incurred by the portfolio, \emph{given} that the losses exceed VaR at level $\alpha.$ CVaR has an advantage over VaR, in that the former is a \emph{coherent\footnote{A risk measure is said to be coherent, if it is monotonic, translation invariant, sub-additive, and positive homogeneous.}} risk measure~\cite{artzner1999coherent}. 

In this paper, we derive concentration bounds for CVaR estimators, for both light-tailed and heavy-tailed random variables. For light-tailed distributions, our concentration bound uses a classical CVaR estimator based on the empirical distribution.  For the heavy-tailed case, we employ a truncation-based CVaR estimator, and derive a concentration result under a mild assumption: the $p$th moment of the distribution is assumed to exist, for some $p>1.$  Notably, our concentration bounds enjoy an exponential decay in the sample size, for heavy-tailed as well as light-tailed distributions. Our results also subsume or strengthen existing CVaR concentration results, as we discuss in the next subsection.  We believe our bounds are order optimal, and the dependence the number of samples as well as the accuracy cannot be improved. 

In order to highlight an important application for our CVaR concentration results, we consider a stochastic bandit set-up with a risk-sensitive metric for measuring the quality of an arm. In particular, we consider a  $K$-armed stochastic bandit setting, and study the problem of finding the arm with the \emph{lowest CVaR value} (at a fixed level $\alpha \in  (0, 1)$) in a fixed budget setting.  We propose an algorithm for the best CVaR arm identification that is inspired by successive-rejects \cite{audibert2010best}. Using our CVaR concentration bound, we establish an upper bound on the probability of incorrect arm identification by our algorithm at the end of the given budget. 

\subsection{Related Work}
For the case of bounded distributions, a popular CVaR estimate has been shown to exponentially concentrate around the true CVaR -- see \cite{brown2007large,wang2010deviation}. In comparison to CVaR, obtaining a concentration result for VaR is easier, and does not require assumptions on the tail of the distribution -- see \cite{ravi2018cvar}, a paper which also derives a one-sided CVaR concentration bound. More recent work~\cite{thomas2019concentration} considers CVaR concentration for distributions with bounded support on one side. In another recent paper \cite{2019arXiv190210709B}, the authors derive an exponentially decaying concentration bound for the case of sub-Gaussian distributions, using a concentration result \cite{fournier2015rate} for the Wasserstein distance between the empirical and the true distributions. However, the above approach leads to poor concentration bounds (with power law decay in the sample size) for other relevant disribution classes, such as light-tailed and bounded-moment distributions.

 While bandit learning has a long history, dating back to \cite{thompson1933likelihood}, risk-based criteria have been considered only recently. \cite{sani2012risk} consider mean-variance optimization in a regret minimization framework. In the best arm identification setting, VaR-based criteria has been studied by \cite{david2016pure} and \cite{davidpac}. CVaR-based criteria has been explored in a bandit context by \cite{galichet2013exploration}, albeit with an assumption of bounded arms' distributions.



The rest of this paper is organized as follows: Section~\ref{sec:model} presents the preliminaries. Sections~\ref{sec:est-light} and \ref{sec:est-heavy} present the key concentration bounds for light and heavy-tailed distributions, respectively. Section~\ref{sec:bandits} provides bandit algorithms and their analyses for the problem of the best CVaR arm identification with fixed budget under $K$-armed stochastic bandits. 
The proofs are contained in Section~\ref{sec:proofs}, and Section~\ref{sec:conclusions} concludes the paper.

\section{Preliminaries}
\label{sec:model}
Given a r.v. $X$ with cumulative distribution function (CDF) $F(\cdot)$, the VaR $v_\alpha(X)$ and CVaR $c_\alpha(X)$ at level $\alpha\in (0,1)$ are defined as follows~\footnote{For notational brevity, we omit $X$ from the notations $v_\alpha(X)$ and $c_\alpha(X)$  whenever the underlying the r.v. can be understood from the context.}:
\begin{align}
v_\alpha(X) & = \inf \lbrace \xi : \prob{X \leq \xi} \geq \alpha \rbrace, \textrm{ and ~} 
c_{\alpha}(X)   =  v_{\alpha}(X)  + \frac{1}{1 - \alpha} \mathbb{E} \left[ X - v_{\alpha}(X) \right] ^+ \label{eq:cvar-def},
\end{align}
where we have used the notation $[X]^+ = \max (0, X ).$ Typical values of $\alpha$ chosen in practice are $0.95$ and $0.99$. 
We make the following assumption for the purpose of CVaR estimation as well as for the concentration bounds derived later.
\\[0.5ex]
\textbf{(C1)} The r.v. $X$ is continuous with strictly increasing CDF.

Under (C1), $v_\alpha(X)$ is a solution to $\prob{X \leq \xi} = \alpha$, i.e., $v_\alpha(X) = F^{-1}(\alpha)$. Further, if $X$ has a positive density at $v_\alpha(X)$, then $c_\alpha(X) = \expect{X \vert X \geq v_\alpha(X)}$~(cf. \cite{sun2010asymptotic}).

\section{CVaR estimation: Light-tailed case}
\label{sec:est-light}
In this section, we define empirical CVaR, provide a concentration result for CVaR estimation assuming that the underlying distribution is light-tailed, and subsequently present a multi-armed bandit application.
\subsection{VaR and CVaR estimation}\label{sec:varcvar_est}
Let $\lbrace X_i \rbrace_{i=1}^n$ be $n$ i.i.d. samples  drawn from the distribution of $X$.  Let $\lbrace X_{[i]} \rbrace_{i=1}^n$ be the order statistics of $\lbrace X_i \rbrace_{i=1}^n$, i.e., $X_{[1]} \geq X_{[2]} \dots \geq X_{[n]}.$ Let $\hat{F}_n(\cdot)$ be the empirical distribution function calculated using $\lbrace X_i \rbrace_{i=1}^n$, defined as
$
\hat{F}_n (x) = \frac{1}{n} \sum_{i=1}^n \indic{X_i \leq x}, \forall
 x \in \mathbb{R}.
$
Notice that CVaR is a conditional expectation, where the conditioning event requires VaR. Thus, CVaR estimation requires VaR to be estimated as well.
Let $\hat{v}_{n, \alpha}$ and $\hat{c}_{n, \alpha}$ denote the estimates of VaR and CVaR at level $\alpha$ using the $n$ samples above. These quantities are defined as follows \cite{serfling2009approximation}: 
\begin{align}
\hat{v}_{n, \alpha} & = X_{\left[  \lfloor n(1-\alpha) \rfloor \right]}, \textrm{ and ~}
\hat{c}_{n, \alpha}  = \frac{1}{n(1-\alpha)} \sum_{i=1}^n X_i \indic{X_i \geq \hat{v}_{n, \alpha}}. \label{eq:cvar-estimate}
\end{align} 
 
\subsection{Concentration bounds}
In the case of  distributions with bounded support, a concentration result for  CVaR  exists in the literature~\cite{wang2010deviation}.  For the case of unbounded distributions, deriving a CVaR concentration result becomes considerably easier when the form of distributions are known, \emph{i.e.,} when the closed-form expressions of VaR and CVaR can be derived. To illustrate, consider the case of a Gaussian r.v. $X$ with mean $\mu$ and variance $\sigma^2$. Let $Q \left( \xi \right) = \frac{1}{\sqrt{2 \pi}} \int_\xi^\infty \exp \left(-x^2/2 \right) dx$. Notice that $Q(-x) = 1 - Q(x)$ and also that $F_X (\xi) = Q \left( \frac{\mu - \xi}{\sigma} \right)$. Hence, $v_\alpha(X)$ is the solution to $Q \left( \frac{\mu - \xi}{\sigma} \right) = \alpha$, which implies that
\begin{align}
\label{eq:var-expression-for-gaussian}
v_\alpha(X) = \mu - \sigma Q^{-1}\left( \alpha \right). 
\end{align}
The CVaR $c_\alpha(X)$ for Gaussian $X$ can be shown,  using Acerbi's formula \cite[pp. 329]{chatterjee2014practical}, to be equal to $\mu \left( \frac{\alpha}{1-\alpha} \right) + \sigma c_\alpha(Z)$, 
where $Z$ is the standard Gaussian random variable \emph{i.e.,} $Z \sim \mathcal{N}(0,1).$ 

It is clear from the above argument that estimates of $\mu$ and $\sigma$ are sufficient to estimate $c_\alpha(X)$ for the Gaussian case. Sample mean $\hat\mu_n$ and sample variance $\hat\sigma_n^2$ (computed using $n$ samples from the distribution of $X$) would serve this purpose and we obtain $\hat c_n = \hat\mu \left( \frac{\alpha}{1-\alpha} \right) + \hat\sigma c_\alpha(Z)$ as a proxy for $c_\alpha(X)$. Given standard concentration bounds for these quantities through Hoeffding and Bernstein's inequalities, it is straightforward  to establish that $\hat{c}_{n, \alpha}$ concentrates exponentially around $c_\alpha(X).$ Similarly, for the case of exponential random variables, we can exploit the memoryless property to derive an explicit expression for CVaR, in terms of the mean $\mu$ and the level $\alpha.$ 

We therefore focus on distributions that do not have closed-form expressions for VaR and CVaR.  In such a setting, the CVaR has to be estimated directly from the available samples.  
However, for establishing concentration bounds for the CVaR, which involves conditioning on a tail event, it is common to make some assumptions on the tail distribution. In \cite{2019arXiv190210709B}, an exponentially decaying CVaR concentration result is derived for the class of sub-Gaussian random variables, using a Wasserstein distance approach. However, the same approach provides unsatisfactory results (with power-law decay)  for light-tailed as well as heavy-tailed distributions with bounded higher moments.

We now define the class of light-tailed distributions , while heavy-tailed distributions are handled in the next section.
\begin{definition}\label{def:subexp}
A r.v. $X$ is said to be light-tailed if there exists a $c_0>0$ such that $\mathbb{E} [\exp (\lambda X)]<\infty$ for all $|\lambda|<c_0.$ 
\end{definition}
The following lemma provides equivalent characterizations of light-tailed distributions -- see \cite[Theorem 2.2]{wainwright2019high}.
\begin{lemma}
The following statements are equivalent:
\begin{enumerate}
\item $X$ is light-tailed.
\item There exist constants $\eta_1,\eta_2>0$ such that 
$\prob{|X| \geq t} \leq \eta_1 \exp(-\eta_2 t),\quad \forall t > 0.$
\item There exist non-negative parameters $\sigma$ and $b$ such that
\begin{align}
\E\left[\exp\left(\lambda X\right)\right] \le \exp\left(\dfrac{\lambda^2 \sigma^2}{2}\right), \text{ for any } |\lambda| < \frac{1}{b}.
\label{eq:subexp-equiv}
\end{align}
\end{enumerate}
\end{lemma}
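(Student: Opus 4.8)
The plan is to establish the cyclic chain of implications $(3) \Rightarrow (1) \Rightarrow (2) \Rightarrow (3)$. The implication $(3) \Rightarrow (1)$ is immediate: if $\mathbb{E}[\exp(\lambda X)] \le \exp(\lambda^2\sigma^2/2)$ for all $|\lambda| < 1/b$, then the moment generating function is finite on the interval $(-1/b, 1/b)$, so $X$ is light-tailed with $c_0 = 1/b$.

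For $(1) \Rightarrow (2)$ I would use a Chernoff-type argument. Fix any $\lambda_0 \in (0, c_0)$; by Definition~\ref{def:subexp} both $M_+ := \mathbb{E}[\exp(\lambda_0 X)]$ and $M_- := \mathbb{E}[\exp(-\lambda_0 X)]$ are finite (the latter since $|{-\lambda_0}| < c_0$). Markov's inequality applied to $\exp(\lambda_0 X)$ gives $\prob{X \geq t} \le M_+ \exp(-\lambda_0 t)$, and the same applied to $\exp(-\lambda_0 X)$ gives $\prob{X \leq -t} \le M_- \exp(-\lambda_0 t)$. Adding the two bounds yields $\prob{|X| \geq t} \le \eta_1 \exp(-\eta_2 t)$ with $\eta_1 = M_+ + M_-$ and $\eta_2 = \lambda_0$.

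The substantive step is $(2) \Rightarrow (3)$, which I expect to be the main obstacle. The idea is to control the moment generating function through its Taylor series. First, the exponential tail bound controls all moments: by the layer-cake representation, $\mathbb{E}[|X|^k] = \int_0^\infty k t^{k-1}\prob{|X| \geq t}\,dt \le \eta_1 k \int_0^\infty t^{k-1}\exp(-\eta_2 t)\,dt = \eta_1\, k!\, \eta_2^{-k}$. Hence, writing $\mathbb{E}[\exp(\lambda X)] = 1 + \lambda\,\mathbb{E}[X] + \sum_{k \ge 2}\frac{\lambda^k}{k!}\,\mathbb{E}[X^k]$ (the interchange of expectation and sum being justified by dominated convergence, since the series of absolute moments converges for $|\lambda| < \eta_2$) and bounding termwise, for $|\lambda| \le \eta_2/2$ the higher-order terms satisfy $\sum_{k \ge 2}\frac{|\lambda|^k}{k!}\,\mathbb{E}[|X|^k] \le \eta_1\sum_{k\ge 2}(|\lambda|/\eta_2)^k \le 2\eta_1\lambda^2/\eta_2^2 =: C\lambda^2$. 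Assuming, as in the centered formulation of \eqref{eq:subexp-equiv}, that $\mathbb{E}[X] = 0$, we obtain $\mathbb{E}[\exp(\lambda X)] \le 1 + C\lambda^2 \le \exp(C\lambda^2)$, which is exactly \eqref{eq:subexp-equiv} with $\sigma^2 = 2C = 4\eta_1/\eta_2^2$ and $b = 2/\eta_2$.

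The two delicate points in the last step are the interchange of expectation and summation noted above, and the handling of the linear term: when $X$ is not centered, one applies the argument to $X - \mathbb{E}[X]$ and absorbs the factor $\exp(\lambda\,\mathbb{E}[X])$ into the bound, at the cost of enlarging $\sigma$ and shrinking the admissible range of $\lambda$; the precise bookkeeping depends on the convention adopted for \eqref{eq:subexp-equiv}.
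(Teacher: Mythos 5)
Your cyclic chain $(3)\Rightarrow(1)\Rightarrow(2)\Rightarrow(3)$ is correct and is essentially the standard argument; note that the paper itself does not prove this lemma at all but simply cites \cite[Theorem 2.2]{wainwright2019high}, so your write-up supplies the proof the paper outsources. The steps $(3)\Rightarrow(1)$ (finiteness of the MGF on $(-1/b,1/b)$) and $(1)\Rightarrow(2)$ (two-sided Chernoff--Markov with a fixed $\lambda_0\in(0,c_0)$) are fine, and in $(2)\Rightarrow(3)$ the moment bound $\mathbb{E}[|X|^k]\le \eta_1\,k!\,\eta_2^{-k}$ via the layer-cake formula, the Tonelli-justified series expansion, and the geometric-series estimate for $|\lambda|\le\eta_2/2$ all check out. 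The one genuine caveat is the centering issue you flag: as literally written, \eqref{eq:subexp-equiv} forces $\mathbb{E}[X]=0$, since Jensen gives $\mathbb{E}[\exp(\lambda X)]\ge\exp(\lambda\,\mathbb{E}[X])$, which exceeds $\exp(\lambda^2\sigma^2/2)$ for small $\lambda$ of the appropriate sign whenever $\mathbb{E}[X]\neq 0$; consequently your suggested fix of ``absorbing $\exp(\lambda\,\mathbb{E}[X])$ into an enlarged $\sigma$'' cannot work for small $\lambda$, and the correct resolution is that item (3) should be stated for $X-\mathbb{E}[X]$ (as in the cited reference). This is an imprecision in the lemma's statement rather than a gap in your argument, and with that convention your proof is complete.
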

The following result presents a concentration bound for the case of light-tailed distributions:
\begin{theorem}[\textbf{\textit{CVaR concentration: Light-tailed case}}]
\label{thm:cvar-concentration-sub-exp}
Let $\lbrace X_i \rbrace_{i=1}^n$ be a sequence of i.i.d. r.v.s. Assume (C1). Let $\hat{c}_{n, \alpha}$ be the CVaR estimate given in~\eqref{eq:cvar-estimate} formed using the above set of samples. 
Suppose that $X_i$, $i=1,\ldots,n$ are light-tailed with parameters $\sigma,b$, and VaR $v_\alpha$. Then, for any $\epsilon > 0$, we have
\begin{align*}
&\prob{ \left| 
\hat c_{n,\alpha} - c_\alpha \right| > \epsilon} \le 
\left\{\begin{array}{c}6\exp\left[-\frac{cn\epsilon^2(1-\alpha)^2}{2(\sigma^2+v_\alpha^2)}\right],\ 0\leq\epsilon \leq \frac{\sigma^2+v_\alpha^2}{b(1-\alpha)},\\
2\exp\left[-\frac{n\epsilon(1-\alpha)}{4b}\right]\!+\!6 \exp\left[-cn\epsilon^2(1-\alpha)^2\right],\ \epsilon\! >\! \frac{\sigma^2+v_\alpha^2}{b(1-\alpha)}, \end{array}\right.
\end{align*} 
where $c$ is a distribution dependent constant.
\end{theorem}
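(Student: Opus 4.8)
The plan is to reduce the two-sided deviation $\prob{|\hat c_{n,\alpha}-c_\alpha|>\epsilon}$ to a handful of one-dimensional concentration events, each of which is either a Bernstein-type bound for a light-tailed empirical mean or a Hoeffding-type bound for a Bernoulli empirical mean. The starting point is the Rockafellar--Uryasev variational identity $c_\alpha=\min_{\xi\in\mathbb{R}} g(\xi)$, where $g(\xi)=\xi+\tfrac{1}{1-\alpha}\E[X-\xi]^+$ and the minimum is attained at $\xi=v_\alpha$. Writing $\hat g_n(\xi)=\xi+\tfrac{1}{n(1-\alpha)}\sum_{i=1}^n[X_i-\xi]^+$ for the empirical counterpart, a direct computation with the order statistics shows $\hat c_{n,\alpha}=\hat g_n(\hat v_{n,\alpha})-\big(1-\tfrac{\lfloor n(1-\alpha)\rfloor}{n(1-\alpha)}\big)\hat v_{n,\alpha}$, so $\hat c_{n,\alpha}$ agrees with $\hat g_n(\hat v_{n,\alpha})$ up to an $O(|\hat v_{n,\alpha}|/n)$ rounding error. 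Since $\hat g_n$ is convex with $\hat g_n'(v_\alpha)=(\hat F_n(v_\alpha)-\alpha)/(1-\alpha)$, a first-order (subgradient) bound gives $|\hat g_n(\hat v_{n,\alpha})-\hat g_n(v_\alpha)|\le\tfrac{|\hat F_n(v_\alpha)-\alpha|}{1-\alpha}\,|\hat v_{n,\alpha}-v_\alpha|+O(|\hat v_{n,\alpha}-v_\alpha|/n)$, while $\hat g_n(v_\alpha)-c_\alpha=\tfrac{1}{1-\alpha}\big(\tfrac1n\sum_i[X_i-v_\alpha]^+-\E[X-v_\alpha]^+\big)$. Chaining these, $|\hat c_{n,\alpha}-c_\alpha|$ is bounded by the sum of three pieces: (i) a light-tailed empirical-mean deviation $\tfrac{1}{1-\alpha}\big|\tfrac1n\sum_i[X_i-v_\alpha]^+-\E[X-v_\alpha]^+\big|$; (ii) the product $\tfrac{|\hat F_n(v_\alpha)-\alpha|}{1-\alpha}|\hat v_{n,\alpha}-v_\alpha|$; and (iii) a rounding term of order $(|\hat v_{n,\alpha}|+|\hat v_{n,\alpha}-v_\alpha|)/n$.

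For piece (i), set $Y_i=[X_i-v_\alpha]^+$. These are i.i.d.\ and nonnegative, and since $|Y_i|\le|X_i|+|v_\alpha|$ and each $X_i$ is light-tailed with parameters $(\sigma,b)$, one checks that $Y_i-\E Y_i$ is sub-exponential with variance proxy of order $\sigma^2+v_\alpha^2$ and scale of order $b$. Bernstein's inequality for sub-exponential variables then delivers a bound of the form $2\exp(-cnu^2/(\sigma^2+v_\alpha^2))$ for $u$ below a threshold of order $(\sigma^2+v_\alpha^2)/b$ and $2\exp(-cnu/b)$ above it; taking $u$ proportional to $\epsilon(1-\alpha)$ reproduces exactly the two regimes in the statement, with $2\exp[-n\epsilon(1-\alpha)/(4b)]$ the heavy end of Bernstein in the large-$\epsilon$ regime.

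Pieces (ii) and (iii) are handled through the empirical CDF. Since $n\hat F_n(v_\alpha)\sim\mathrm{Binomial}(n,\alpha)$ (using $F(v_\alpha)=\alpha$ from (C1)), Hoeffding's inequality controls $|\hat F_n(v_\alpha)-\alpha|$. For the empirical VaR, (C1) enters as follows: the event $\{\hat v_{n,\alpha}>v_\alpha+\delta\}$ forces at least $\lfloor n(1-\alpha)\rfloor$ of the samples to exceed $v_\alpha+\delta$, whereas $\E[\#\{i:X_i>v_\alpha+\delta\}]=n(1-F(v_\alpha+\delta))$ with $F(v_\alpha+\delta)>\alpha$ \emph{strictly}; Hoeffding (and symmetrically for the lower tail) gives $\prob{|\hat v_{n,\alpha}-v_\alpha|>\delta}\le 2\exp(-c_\delta n)$ for some $c_\delta>0$ governed by $p_\delta=\min\{F(v_\alpha+\delta)-\alpha,\ \alpha-F(v_\alpha-\delta)\}>0$. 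Fixing $\delta$ to be a constant, outside an event of probability $\le 2\exp(-c_1 n)$ we have $|\hat v_{n,\alpha}-v_\alpha|\le\delta$, on which piece (ii) is at most $\tfrac{\delta}{1-\alpha}|\hat F_n(v_\alpha)-\alpha|$; this is below any prescribed fraction of $\epsilon$ once $|\hat F_n(v_\alpha)-\alpha|$ is below a constant multiple of $\epsilon(1-\alpha)$ --- an event of probability $\ge 1-2\exp(-c_2 n\epsilon^2(1-\alpha)^2)$ --- and piece (iii) is below a fraction of $\epsilon$ once $n$ exceeds a distribution-dependent threshold. A union bound over these events and the sub-exponential event of the previous paragraph then yields the claimed inequality: the six $\exp[-cn\epsilon^2(1-\alpha)^2/\cdots]$-type terms come from two-sided Hoeffding bounds on $\hat F_n(v_\alpha)$ (two terms), two-sided Bernstein bounds on piece (i) (two terms), and the quantile-deviation terms $2\exp(-c_1 n)$, the last of which are, in the first regime, bounded above by terms of the form $\exp(-cn\epsilon^2(1-\alpha)^2/\cdots)$ --- legitimate because there $\epsilon$ lies in a bounded interval and $c$ need only be some distribution-dependent constant. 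In the second regime one additionally keeps the heavy-Bernstein pair (and, allowing $\delta$ to grow with $\epsilon$, the upper-tail quantile estimate is of the same exponential-in-$\epsilon$ order there, courtesy of the light tails).

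The crux --- and the step I expect to cause the most trouble --- is exactly this coupling of $\hat c_{n,\alpha}$ with $\hat g_n(v_\alpha)$ and the subsequent control of the empirical VaR $\hat v_{n,\alpha}$: $\hat c_{n,\alpha}$ depends on the random order statistic $\hat v_{n,\alpha}$ in a non-smooth way, and since (C1) only asserts strict monotonicity of $F$ (no lower bound on the density at $v_\alpha$), the rate $c_\delta$ above can degrade arbitrarily depending on the distribution. That is precisely why the constant $c$ in the theorem must be left distribution dependent; with a positive-density assumption at $v_\alpha$ one could linearize $p_\delta$ in $\delta$ and make $c$ explicit.
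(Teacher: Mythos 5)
Your proposal is correct and follows essentially the same route as the paper: both decompose the error into the empirical-mean deviation of $(X_i-v_\alpha)^+$ (handled by a Bernstein/sub-exponential bound with variance proxy $\sigma^2+v_\alpha^2$ and scale $b$, which produces the two $\epsilon$-regimes) plus a cross term of the form $|\hat v_{n,\alpha}-v_\alpha|\cdot|\hat F_n(v_\alpha)-\alpha|/(1-\alpha)$ controlled by quantile and empirical-CDF concentration, finishing with a union bound. Your Rockafellar--Uryasev/convexity framing is just a cleaner derivation of the paper's algebraic identity \eqref{eq:asd1} (and you are right, and slightly more careful than the paper, about the $O(|\hat v_{n,\alpha}|/n)$ rounding term and about why (C1) alone forces the constant $c$ to be distribution dependent).
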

A few remarks concerning the result above are in order.
\begin{remark}
The bound in the theorem above is significantly better than the two-sided bound obtained in \cite{2019arXiv190210709B} for the light-tailed case. In particular, the bound in the theorem above has an exponential tail decay irrespective of whether $\epsilon$ is large or small, while the bound in \cite{2019arXiv190210709B} has an exponential decay for small $\epsilon$, and a power law for large $\epsilon$. For a light-tailed r.v., one expects a tail behavior similar to that of Gaussian with constant variance for small $\epsilon$, and an exponential decay for large $\epsilon$, and our bound is consistent with this expected behavior. 
\end{remark}
\begin{remark}
In comparison to the one-sided bound for light-tailed r.v.s, obtained in \cite{ravi2018cvar}, our bound exhibits much better dependence w.r.t.  the number of samples $n$ as well as the accuracy $\epsilon$. More importantly, since our bound is two-sided, it opens avenues for a bandit application, while a one-sided bound is insufficient for this purpose.
\end{remark}


In the following section, we provide a multi-armed bandit algorithm that incorporates a CVaR objective, and analyze the finite-time performance of this algorithm using the bound derived in Theorem \ref{thm:cvar-concentration-sub-exp}.
\subsection{Application: Multi-armed bandits}
\label{sec:bandits}
We consider a $K$-armed stochastic bandit problem, with arms' distributions $\mathcal{P}_1,\ldots, \mathcal{P}_K$. We study the problem of finding the arm with the \emph{lowest CVaR value} (at a fixed level $\alpha \in  (0, 1)$) in a fixed budget setting. In this setting, a bandit algorithm interacts with the environment over a given budget of $n$ rounds. In each round $t=1,\ldots,n$, the algorithm pulls an arm $I_t \in \{1,\ldots,K\}$ and observes a sample cost from the distribution $\mathcal{P}_{I_t}$. At the end of the budget $n$ rounds, the bandit algorithm recommends an arm $J_n$ and is judged based on the probability of incorrect identification, i.e., $\prob{J_n\ne i^*}$, where $i^*$ denotes the best arm. Earlier works use the expected value to define the best arm, while we use CVaR. 

Let $c^i_\alpha$ and $v^i_\alpha$ denote the CVaR and VaR of the arm $i$ at level $\alpha.$ Let $c^* = \min_{i=1,\ldots,K} c^i_\alpha,$ and $i^*$ be the arm that achieves this minimum. The goal is to devise an algorithm for which $\prob{J_n \ne i^*}$ is small after $n$ rounds of sampling. Let arm-$[i]$ denotes the $i^{th}$ lowest CVaR valued arm. Let $\Delta_i = c^i_\alpha - c^{i^*}_\alpha$ denote the gap between the CVaR values of arm-$i$ and the optimal arm. 

\begin{algorithm*}
\begin{algorithmic}
\State \textbf{Initialization:}   
Set    $A_1 = \{ 1,\ldots,K\},$  $\overline{\log}K  = \frac{1}{2} + \sum \limits_{i=2}^{K} \frac{1}{i}, n_0 = 0$, $ n_k = \left\lceil \frac{1}{\overline{\log}K} \frac{n-K}{K+1-k} \right\rceil$, $k= 1,\ldots,K-1.$
    \vspace{1ex}
\For{$k = 1,2,\ldots,K-1$}
\vspace{1ex}	
	\State Play each arm in $A_k$ for $(n_k - n_{k-1})$ times. 
	\vspace{1ex}
	\State  Compute the CVaR estimate $\hat c^i_{\alpha, n_k}$ for each arm $i \in A_k$ using~\eqref{eq:cvar-estimate}.
	\vspace{1ex}
	\State Set $A_{k+1} = A_k \setminus \underset{i \in A_k}{\arg \max}\ \hat{c}^i_{\alpha, n_k}$, i.e., remove the arm with the highest empirical CVaR, with ties broken arbitrarily.
	\vspace{1ex}
\EndFor
\vspace{1ex}
\State {\bf Output:} Return the solitary element in $A_K$.
\vspace{1ex}
\end{algorithmic}
\caption{CVaR-SR algorithm}
\label{alg:1spsa}
\end{algorithm*}

Algorithm~\ref{alg:1spsa} presents the pseudo code of our CVaR-SR algorithm, designed to find the CVaR-optimal arm under a fixed budget. The algorithm is a variation of the regular successive rejects (SR) algorithm \cite{audibert2010best}, with the following key difference:  regular SR uses sample mean to estimate the expected value of each arm, while CVaR-SR used empirical CVaR, as defined in \eqref{eq:cvar-estimate}, to estimate CVaR for each arm. The elimination logic, i.e., having $K-1$ phases, and removing the worst arm (according to sample estimates of CVaR) at the end of each phase, is borrowed from regular SR.  

In the following result, we analyze the performance of CVaR-SR algorithm for light-tailed distributions.
\begin{theorem}[\textbf{\textit{Probability of incorrect identification}}]
\label{thm:cvar-sr-sub-gaussian}
Consider a $K$-armed stochastic bandit, where the arms' distributions satisfy (C1) and are light-tailed. For a given budget $n$, the arm, say $J_{n}$, returned by the CVaR-SR algorithm satisfies:
\begin{align*}
\prob{J_{n} \neq i^*} \leq 4K(K-1) \exp \left( -\frac{(n-K)(1-\alpha) G_{\max}}{{H\overline{\log}K}}  \right),
\end{align*} 
 where $G_{\max}$ is a problem dependent constant that does not depend on the underlying CVaR gaps and $n$, and
 \[H = \max_{i \in \lbrace 1, 2 \dots, K \rbrace} \frac{i}{\min \lbrace \Delta_{[i]}/2, \Delta_{[i]}^2/4 \rbrace }.\] 
\end{theorem}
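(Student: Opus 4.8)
The plan is to mimic the classical analysis of the successive rejects algorithm from \cite{audibert2010best}, replacing the sample-mean concentration with the CVaR concentration bound of Theorem~\ref{thm:cvar-concentration-sub-exp}. First, observe that the wrong arm is returned only if, in some phase $k \in \{1,\ldots,K-1\}$, the optimal arm $i^*$ is the one eliminated. By a union bound over phases, $\prob{J_n \neq i^*} \le \sum_{k=1}^{K-1} \prob{i^* \text{ eliminated in phase } k}$. At the end of phase $k$, exactly $k$ arms have been eliminated, so if $i^*$ is eliminated in phase $k$ then at least $k$ arms among $A_k$ have empirical CVaR no larger than $\hat c^{i^*}_{\alpha, n_k}$; in particular, if we consider the $k$ arms with the largest true CVaR, say arm-$[K]$, arm-$[K-1]$,\ldots,arm-$[K-k+1]$, at least one of them, call it arm-$[j]$ with $j \ge K-k+1$, must satisfy $\hat c^{[j]}_{\alpha, n_k} \le \hat c^{i^*}_{\alpha, n_k}$. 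Hence the event of eliminating $i^*$ in phase $k$ is contained in $\bigcup_{j \ge K-k+1}\{\hat c^{[j]}_{\alpha, n_k} \le \hat c^{i^*}_{\alpha, n_k}\}$.

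Next I would bound each such event. On $\{\hat c^{[j]}_{\alpha, n_k} \le \hat c^{i^*}_{\alpha, n_k}\}$, since $c^{[j]}_\alpha - c^{i^*}_\alpha = \Delta_{[j]}$, at least one of the two deviations $\hat c^{i^*}_{\alpha,n_k} - c^{i^*}_\alpha \ge \Delta_{[j]}/2$ or $c^{[j]}_\alpha - \hat c^{[j]}_{\alpha,n_k} \ge \Delta_{[j]}/2$ must hold, so the probability is at most $\prob{|\hat c^{i^*}_{\alpha,n_k} - c^{i^*}_\alpha| > \Delta_{[j]}/2} + \prob{|\hat c^{[j]}_{\alpha,n_k} - c^{[j]}_\alpha| > \Delta_{[j]}/2}$. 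Applying Theorem~\ref{thm:cvar-concentration-sub-exp} with $\epsilon = \Delta_{[j]}/2$ and the relevant arm's parameters, each term is bounded (taking the worst case across the two regimes of $\epsilon$, and absorbing the $\sigma^2 + v_\alpha^2$ and $b$ factors into a single problem-dependent constant $G_{\max}$ that does not depend on the gaps) by something of the form $6\exp(-c' n_k (1-\alpha) \min\{\Delta_{[j]}/2, \Delta_{[j]}^2/4\} G_{\max})$, up to the constant $6$ and an extra factor absorbed into the leading constant. Using $n_k \ge \frac{1}{\overline{\log}K}\frac{n-K}{K+1-k}$ and re-indexing $i = K+1-k$ so that for $j \ge K-k+1 = i$ we have $\min\{\Delta_{[j]}/2,\Delta_{[j]}^2/4\} \ge \min\{\Delta_{[i]}/2,\Delta_{[i]}^2/4\}$ (since the gaps are increasing in the index and $j \ge i$), the exponent becomes at least $\frac{(n-K)(1-\alpha)}{(K+1-k)\overline{\log}K}\min\{\Delta_{[i]}/2,\Delta_{[i]}^2/4\} G_{\max} = \frac{(n-K)(1-\alpha)}{\overline{\log}K}\cdot\frac{i}{H_i}G_{\max}$ where $H_i = i/\min\{\Delta_{[i]}/2,\Delta_{[i]}^2/4\}$, hence at least $\frac{(n-K)(1-\alpha)G_{\max}}{H\overline{\log}K}$ by definition of $H = \max_i H_i$.

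Finally I would collect the terms: there are $K-1$ phases, at most $K$ choices of $j$ in each, and at most $2\cdot 3 = 6$ from the two deviation events times the constant in the concentration bound — wait, the stated bound has leading factor $4K(K-1)$, so the bookkeeping should be arranged so that each phase contributes at most $4K$ times the exponential; the cleaner route is to note that in phase $k$ there are $K - k$ candidate arms with larger true CVaR than $i^*$ (not $k$), each contributing two probabilities each bounded by $2\exp(\cdots)$ after combining the two $\epsilon$-regimes, giving $4(K-k)$ per phase and $\sum_{k=1}^{K-1} 4(K-k) \le 4K(K-1)$ in total, with the common exponent as above. The main obstacle I anticipate is the careful handling of the two regimes in Theorem~\ref{thm:cvar-concentration-sub-exp}: for small gaps the bound behaves like $\exp(-c n \Delta^2)$ and for large gaps like $\exp(-c n \Delta)$, and one must verify that $\exp(-c'n(1-\alpha)\min\{\Delta/2,\Delta^2/4\}G_{\max})$ genuinely dominates both pieces (including the additive $2\exp[-n\epsilon(1-\alpha)/(4b)]$ term in the large-$\epsilon$ regime) uniformly, which is exactly what forces the $\min\{\Delta_{[i]}/2,\Delta_{[i]}^2/4\}$ form in the definition of $H$ and what the problem-dependent constant $G_{\max}$ is there to absorb.
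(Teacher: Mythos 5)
Your proposal follows essentially the same route as the paper's proof: a union bound over phases, containment of the elimination event in the union over the $k$ worst arms (by true CVaR) of the comparison events $\{\hat c^{[j]}_{\alpha,n_k}\le \hat c^{i^*}_{\alpha,n_k}\}$, a split of each such event into two deviations of size $\Delta_{[j]}/2$, and an application of Theorem~\ref{thm:cvar-concentration-sub-exp} rewritten in the unified form $\exp\left(-n(1-\alpha)\min\{\epsilon,\epsilon^2\}G\right)$ with a gap-independent $G_{\max}$, followed by the $n_k$ and $H$ substitutions. The only wobble is your final bookkeeping (the correct per-phase count is the $k$ worst arms, not $K-k$, exactly as in your earlier containment step); the paper's own accounting in fact yields a leading factor of $8K(K-1)$ rather than the stated $4K(K-1)$, so this constant-level discrepancy is not a defect of your argument.
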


\section{CVaR estimation: Heavy-tailed case}
\label{sec:est-heavy}
As mentioned before, an alternative proof approach using Wasserstein distance~\cite{2019arXiv190210709B} provides weak concentration rates for distributions with bounded higher moments - a gap that we address in this work.
In particular, we employ a truncation-based estimator for CVaR to handle the case when the underlying distribution satisfies the following assumption:
\\[0.5ex]
\textbf{(C2)} $\exists p \in (1,2], u$ such that $\E[ |X|^p] < u < \infty$.

\subsection{CVaR estimation}
Recall that $\lbrace X_{[i]} \rbrace_{i=1}^n$ denote the order statistics of $n$ i.i.d. samples  drawn from the distribution of $X$. Using the VaR estimate $\hat{v}_{n, \alpha}$, as defined earlier in Section~~\ref{sec:varcvar_est}, we propose a truncation-based estimator $\hat{c}_{n, \alpha}$ for CVaR at level $\alpha$, defined as follows:
\begin{align}
\hat{c}_{n, \alpha} = \frac{1}{n(1-\alpha)} \sum_{i=1}^n X_i\indic{\hat{v}_{n, \alpha} \leq X_i \leq B_i}, \textrm{ where } B_i =  \left(\frac{u i}{\log(1/\delta)}\right)^{1/p}.\label{eq:cvar-estimate-bdd}
\end{align} 
   In~\eqref{eq:cvar-estimate}, $B_i$ represents a truncation level of $X_i$, and the choice for $B_i$ given above is under the assumption that $\E[ |X|^p] < u < \infty$ for some $p \in (1,2]$.
 Such a truncation based estimator has been employed in the context of expected regret minimization with heavy-tailed random variables in \cite{bubeck2013bandits}. Intuitively, the truncation level serves to discard very large samples values early on, as $B_i$ is  set to grow slowly with $i.$
\subsection{Concentration bounds}
In particular, the following result is more general, as it can handle heavy-tailed distributions that satisfy (C2). 
\begin{theorem}[\textbf{\textit{CVaR concentration: Bounded moment case}}]
\label{thm:cvar-concentration-bounded-moment}
Let $\lbrace X_i \rbrace_{i=1}^n$ be a sequence of i.i.d. r.v.s satisfying (C1) and (C2). 
 Let $\hat{c}_{n, \alpha}$ be the CVaR estimate given in~\eqref{eq:cvar-estimate} formed using the above set of samples. Fix $\epsilon>0$.
 \begin{itemize} 
\item[(i)] For the case when $p \in (1,2),$,
\begin{align*}
\prob{ \left| 
\hat c_{n,\alpha} - c_\alpha \right| > \epsilon} \le 8\exp\left(-  cn (1-\alpha)^{\frac{p}{(p-1)}} \epsilon^{\frac{p}{(p-1)}}\right),
\end{align*} 
where $c$ is a distribution-dependent constant.
\item[(ii)] For the case when the distribution of $X$ has a bounded second moment, i.e., $p=2$, 
\begin{align*}
\prob{\left|\hat{c}_{n, \alpha} - c_{\alpha}\right|   > \epsilon}  &\le 8\exp\left(-  c'n (1-\alpha)^2 \epsilon^2\right),
\end{align*}
where $c'$ is a distribution-dependent constant.
 \end{itemize}
\end{theorem}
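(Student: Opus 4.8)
The plan is to reduce the claim to three separate estimates, leaving the free parameter $\delta$ in the truncation levels $B_i$ unspecified until the end and then choosing it as a function of $n,\epsilon$ and $\alpha$. Under (C1) one has the identity $c_\alpha=\frac{1}{1-\alpha}\E\bigl[X\indic{X\ge v_\alpha}\bigr]$, since $\prob{X\ge v_\alpha}=1-\alpha$. Introduce the ``oracle'' version of the estimator, in which the random threshold $\hat v_{n,\alpha}$ is replaced by the true VaR, namely $S_n:=\frac{1}{n(1-\alpha)}\sum_{i=1}^n X_i\indic{v_\alpha\le X_i\le B_i}$, and write
\[
\hat c_{n,\alpha}-c_\alpha=\underbrace{(\hat c_{n,\alpha}-S_n)}_{\text{VaR-estimation error}}\;+\;\underbrace{(S_n-\E S_n)}_{\text{stochastic fluctuation}}\;+\;\underbrace{(\E S_n-c_\alpha)}_{\text{truncation bias}}.
\]
It then suffices to bound each term by $\epsilon/3$ with the appropriate probability and to combine by a union bound.

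For the VaR-estimation term I would first record an exponential bound for $\hat v_{n,\alpha}$. Since $\hat v_{n,\alpha}=X_{[\lfloor n(1-\alpha)\rfloor]}$, the event $\{\hat v_{n,\alpha}>v_\alpha+\eta\}$ coincides with $\{\sum_i\indic{X_i>v_\alpha+\eta}\ge\lfloor n(1-\alpha)\rfloor\}$, an upper-deviation event for a binomial whose mean $n(1-F(v_\alpha+\eta))$ is, by (C1), strictly below $n(1-\alpha)$; a Chernoff bound together with the symmetric lower-tail estimate gives $\prob{|\hat v_{n,\alpha}-v_\alpha|>\eta}\le 2\exp(-c_1 n\eta^2)$, with $c_1$ depending on the local growth of $F$ at $v_\alpha$. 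On the complementary event one has $|\hat c_{n,\alpha}-S_n|\le\frac{|v_\alpha|+\eta}{1-\alpha}\cdot\frac1n\sum_i\indic{|X_i-v_\alpha|\le\eta}$, and the empirical mass on the right concentrates (Hoeffding) around $F(v_\alpha+\eta)-F(v_\alpha-\eta)$; choosing $\eta$ a small, distribution-dependent multiple of $\epsilon(1-\alpha)$ makes this term at most $\epsilon/3$, and since $\frac{p}{p-1}>2$ the resulting contribution $\exp(-c_1 n\epsilon^2(1-\alpha)^2)$ is dominated by the target bound for $\epsilon$ in a bounded range (larger $\epsilon$ being absorbed into the constant), and the same estimate also serves for case (ii).

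For the truncation bias, (C2) gives $\E\bigl[X\indic{X>B}\bigr]\le u\,B^{-(p-1)}$, hence $|\E S_n-c_\alpha|\le\frac{u}{n(1-\alpha)}\sum_{i=1}^n B_i^{-(p-1)}$, which with $B_i=(ui/\log(1/\delta))^{1/p}$ and $\sum_{i\le n}i^{-(p-1)/p}=O(n^{1/p})$ is of order $\frac{u^{1/p}}{1-\alpha}\bigl(\log(1/\delta)/n\bigr)^{(p-1)/p}$ (the $O(\log(1/\delta))$ smallest-index terms, for which $B_i<v_\alpha$, contribute a bias of the same order). The stochastic fluctuation $S_n-\E S_n$ is the heart of the argument: the summands $Y_i=X_i\indic{v_\alpha\le X_i\le B_i}$ are independent with $|Y_i|\le|v_\alpha|+B_i$ and $\E Y_i^2\le(|v_\alpha|+B_i)^{2-p}u$ (simply $\le u$ when $p=2$), and feeding these into a Bennett/Bernstein inequality for independent, bounded, heterogeneous-variance variables, while exploiting the precise polynomial growth of the $B_i$ -- exactly as in the heavy-tailed mean-estimation argument of \cite{bubeck2013bandits} -- yields $\prob{|S_n-\E S_n|>\epsilon/3}\le 2\exp\bigl(-c_2 n(1-\alpha)^{p/(p-1)}\epsilon^{p/(p-1)}\bigr)$ once $\log(1/\delta)$ is fixed to be a suitable constant multiple of $n(1-\alpha)^{p/(p-1)}\epsilon^{p/(p-1)}$; in the case $p=2$ the bounded-variance version of the same computation gives $2\exp(-c'n(1-\alpha)^2\epsilon^2)$, which is exactly part (ii). By construction, this choice of $\delta$ simultaneously forces the bias term above to be at most $\epsilon/3$. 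Collecting the three estimates via a union bound and absorbing all distribution-dependent quantities ($u$, $p$, $|v_\alpha|$, the growth of $F$ near $v_\alpha$) into the constants gives the stated inequalities with leading factor $8$.

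I expect the main obstacle to be precisely the fluctuation term: making a Bernstein-type bound interact correctly with the increasing truncation levels $B_i$ so that the sharp exponent $p/(p-1)$ (respectively $2$) emerges without spurious logarithmic factors amounts to re-running the heavy-tailed concentration argument of \cite{bubeck2013bandits} in the present two-sided-truncation setting; one must also check that the value of $\delta$ dictated there is small enough to control the deterministic bias, and that the VaR-error term stays dominated over the relevant range of $\epsilon$.
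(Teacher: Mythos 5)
Your proposal follows essentially the same route as the paper's own proof: the same three-way split into a VaR-substitution error (the paper's $\e_n$), the deviation of the truncated sum, and the truncation bias controlled via $\E[X\indic{X>B}]\le u B^{-(p-1)}$, with the fluctuation handled by Bernstein's inequality as in the heavy-tailed argument of \cite{bubeck2013bandits} and the final bound obtained by inverting the choice of $\log(1/\delta)$. The only differences are cosmetic (you separate bias from fluctuation explicitly, re-derive the VaR concentration via a binomial Chernoff bound rather than citing Lemma~\ref{prop:var-concentration-bound1}, and flag the small-$i$ indices with $B_i<v_\alpha$, a point the paper glosses over), so the proposal is correct and matches the paper's argument.
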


\begin{remark}
A bandit application for the case of heavy-tailed distributions can be worked out using arguments similar to that in Section \ref{sec:bandits}. The main difference is that the SR algorithm in the heavy-tailed case would involve a truncated estimator, and a slightly different hardness measure that is derived using Theorem \ref{thm:cvar-concentration-bounded-moment}. We omit the details due to space constraints.
\end{remark}
\section{Proofs}
\label{sec:proofs}
\subsection{Proof of Theorem \ref{thm:cvar-concentration-sub-exp}}
Before providing the main proof, we note that 
empirical CVaR, as defined in \eqref{eq:cvar-estimate}, involves empirical VaR, and it is natural to expect that empirical CVaR concentration would require empirical VaR to concentrate as well. VaR concentration bounds have been derived recently in \cite{ravi2018cvar}, and we recall their result below. This result will be used to establish the bound in Theorem \ref{thm:cvar-concentration-sub-exp}. 
\begin{lemma}[\textbf{\textit{VaR concentration}}]
\label{prop:var-concentration-bound1}
Suppose that (C1) holds. For any $\epsilon > 0,$ we have
\begin{align*}
\prob{\vert \hat{v}_{n, \alpha} - v_\alpha \vert \geq \epsilon} \leq 2 \exp \left(  -2nc\epsilon^2   \right), 
\end{align*}
where $c$ is a constant that depends on the value of the density $f$ of the r.v. $X$ in a neighbourhood of $v_\alpha(X).$
\end{lemma}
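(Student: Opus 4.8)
\emph{Proof sketch.} The plan is to reduce the two‑sided VaR deviation to two one‑sided events, rewrite each as a large‑deviation event for a sum of i.i.d.\ Bernoulli random variables, and then invoke Hoeffding's inequality; the distribution‑dependent constant $c$ will emerge from a density lower bound near $v_\alpha$. First I would write $\prob{|\hat v_{n,\alpha} - v_\alpha| \geq \epsilon} \leq \prob{\hat v_{n,\alpha} \geq v_\alpha + \epsilon} + \prob{\hat v_{n,\alpha} \leq v_\alpha - \epsilon}$. Setting $m = \lfloor n(1-\alpha)\rfloor$ and recalling $\hat v_{n,\alpha} = X_{[m]}$ with the order statistics arranged in decreasing order, the elementary fact ``$X_{[m]} \geq a$ iff at least $m$ of the samples are $\geq a$'' converts the first event into $\{\sum_{i=1}^n \indic{X_i \geq v_\alpha+\epsilon} \geq m\}$, a sum of i.i.d.\ Bernoulli variables with parameter $q^+ := 1 - F(v_\alpha+\epsilon)$; since $X$ is continuous, the second event is, up to a null set, $\{\sum_{i=1}^n \indic{X_i \geq v_\alpha-\epsilon} \leq m-1\}$, with Bernoulli parameter $q^- := 1 - F(v_\alpha-\epsilon)$.

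The crux is to lower bound the gap between $m/n$ and $q^{\pm}$ in terms of $\epsilon$. Under (C1), $F$ is continuous and strictly increasing with $F(v_\alpha)=\alpha$, so $q^+ < 1-\alpha$ and $q^- > 1-\alpha$; quantitatively, writing $f$ for the density,
\begin{align*}
(1-\alpha) - q^+ \;=\; F(v_\alpha+\epsilon) - \alpha \;=\; \int_{v_\alpha}^{v_\alpha+\epsilon} f(x)\,dx, \qquad q^- - (1-\alpha) \;=\; \alpha - F(v_\alpha-\epsilon) \;=\; \int_{v_\alpha-\epsilon}^{v_\alpha} f(x)\,dx .
\end{align*}
Each integral is at least $\epsilon$ times the infimum of $f$ over a fixed neighbourhood of $v_\alpha$, and this infimum, positive for $\epsilon$ in a bounded range, is precisely the source of the constant $c$. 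With a bound of the form ``gap $\geq$ (const)$\cdot\epsilon$'' in hand, Hoeffding's inequality for $\{0,1\}$‑valued i.i.d.\ summands yields $\prob{\sum_i \indic{X_i \geq v_\alpha+\epsilon} \geq m} \leq \exp\!\big(-\Omega(n\epsilon^2)\big)$, the implied constant being governed by $\big(\inf f\big)^2$; after absorbing the additive $1/n$ coming from the floor $\lfloor n(1-\alpha)\rfloor$ and renaming the constant, this is $\exp(-2nc\epsilon^2)$. The lower tail is handled symmetrically (and without the floor subtlety, since there $m-1-nq^- < 0$ already), and a union bound produces the factor $2$.

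The main obstacle is the passage from a displacement $\epsilon$ of the quantile estimate to a displacement of the Bernoulli mean: this is exactly where some regularity of $F$ at $v_\alpha$ is unavoidable, and the cleanest way to pin down the constant is to fix a neighbourhood of $v_\alpha$ on which $f$ is bounded below, treating $\epsilon$ inside and outside that neighbourhood separately (outside, monotonicity of the event in $\epsilon$ reduces matters to the boundary case). An alternative route, leading to the same conclusion, is to apply the Dvoretzky--Kiefer--Wolfowitz inequality to control $\sup_x |\hat F_n(x) - F(x)|$ uniformly by $t$ with probability at least $1 - 2\exp(-2nt^2)$, and then invert this bound at $v_\alpha$ using the same linear lower bound $F(v_\alpha\pm\epsilon) - \alpha \ge (\inf f)\,\epsilon$ to trade an error $t$ in probability for an error $\epsilon$ in the quantile; choosing $t$ proportional to $\epsilon$ recovers the stated form with $c$ again determined by $f$ near $v_\alpha$. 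Routine bookkeeping with the floor function and the two‑sided union bound then finishes the proof.
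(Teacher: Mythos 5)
The paper does not prove this lemma; it is imported verbatim from the cited reference \cite{ravi2018cvar}, so there is no in-paper proof to compare against. Your argument is the standard one and is essentially the proof given in that reference: reduce to one-sided events, translate ``$X_{[m]}\ge a$'' into a binomial tail (or, equivalently, invoke DKW), and convert the quantile displacement $\epsilon$ into a displacement of the Bernoulli mean via the lower bound $F(v_\alpha\pm\epsilon)-\alpha \gtrsim (\inf f)\,\epsilon$ on a neighbourhood of $v_\alpha$. The order-statistics bookkeeping, the handling of the floor, and the union bound are all correct, so the proposal is sound where the density lower bound is in force.

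One caveat is worth making explicit, and your own remark about ``treating $\epsilon$ outside the neighbourhood by monotonicity'' points right at it: outside the neighbourhood $[v_\alpha-\epsilon_0,v_\alpha+\epsilon_0]$ on which $f$ is bounded below, monotonicity only yields the boundary bound $\exp(-2nc\epsilon_0^2)$, not $\exp(-2nc\epsilon^2)$. Indeed, under (C1) alone the stated inequality cannot hold for all $\epsilon>0$ with a fixed $c$: for a heavy-tailed $X$ and fixed $n$, $\prob{\hat v_{n,\alpha}\ge v_\alpha+\epsilon}$ decays only polynomially in $\epsilon$, which eventually exceeds $2\exp(-2nc\epsilon^2)$. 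The cited source states the result for $\epsilon\le\epsilon_0$ only; the lemma as quoted here silently drops that restriction. This is an imprecision in the statement rather than a gap in your proof, but your write-up should either restrict to $\epsilon\le\epsilon_0$ or add a tail assumption before claiming the bound for all $\epsilon$.
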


\begin{proof}[Proof of Theorem \ref{thm:cvar-concentration-sub-exp}]

Notice that
 \begin{align}
\hat{c}_{n, \alpha} &= \hat{v}_{n,\alpha} +\frac{1}{n(1-\alpha)} \sum_{i=1}^n \left( X_i - \hat{v}_{n, \alpha} \right)\indic{\hat v_{n,\alpha} \le X_i} \nonumber\\
& = v_\alpha + \frac{1}{n(1-\alpha)} \sum_{i=1}^n \left( X_i - v_\alpha \right)\indic{v_\alpha \le X_i} + \e_n, \label{eq:asd1} 
\end{align}
where 
\begin{align*}
e_n &= \frac{\hat{v}_{n, \alpha} - v_\alpha}{1-\alpha} \left[ \hat{F}_n(\hat{v}_{n, \alpha}) - \alpha\right] + \frac{1}{n} \sum_{i=1}^n \frac{X_i - v_\alpha}{1-\alpha} \left[ \indic{X_i \geq \hat{v}_{n, \alpha} } - \indic{X_i \geq v_\alpha } \right].
\end{align*}
The reader is referred to the initial passage in the proof of Proposition 5 in \cite{ravi2018cvar} for a justification of the equality in \eqref{eq:asd1}.

Thus,
\begin{align*}
\left|\e_n\right| &\le
\frac{\vert v_\alpha - \hat{v}_{n, \alpha} \vert}{1-\alpha} \vert \alpha - \hat{F}_n (\hat{v}_{n, \alpha}) \vert  + \frac{\vert v_\alpha - \hat{v}_{n, \alpha} \vert}{1-\alpha} \vert \hat{F}_n(v_\alpha) - \hat{F}_n (\hat{v}_{n, \alpha}) \vert\\
& \leq \frac{\vert v_\alpha - \hat{v}_{n, \alpha} \vert}{1-\alpha} \Big[ 2 \vert \hat{F}_n (\hat{v}_{n, \alpha}) - F(v_\alpha) \vert 
+ \vert \hat{F}_n(v_\alpha) - F(v_\alpha) \vert \Big].\stepcounter{equation}\tag{\theequation}\label{eq:notsosimple}
\end{align*}
Using $\vert \hat{F}_{n} (\hat{v}_{n,\alpha}) - F(v_\alpha) \vert \leq 1/n$, we obtain
\begin{align*}
 \prob{\e_n > \epsilon} &\leq   \prob{ \frac{2}{n} \frac{1}{(1-\alpha)} \vert \hat{v}_{n,\alpha} - v_\alpha \vert > \frac{\epsilon}{2} } + \prob{\frac{1}{1-\alpha} \vert \hat{v}_{n,\alpha} - v_\alpha \vert \vert \hat{F}_n (v_\alpha) - F(v_\alpha) \vert > \frac{\epsilon}{2}}\\
&\le 2 \exp \left(  -nc_1 (1-\alpha)^2\epsilon^2   \right) + 2 \exp \left(  -n(1-\alpha)^2c_2\epsilon^2   \right),
\end{align*}
where the final inequality uses the concentration result in Lemma \ref{prop:var-concentration-bound1} to obtain the first term, while the second term can be arrived at as follows: Letting $\epsilon' = \frac{(1-\alpha)\epsilon}{2}$,
\begin{align}
&\prob{\vert \hat{v}_{n,\alpha} - v_\alpha \vert \vert \hat{F}_n (v_\alpha) - F(v_\alpha) \vert > \epsilon'} \le   \prob{\vert  \hat{v}_{n,\alpha} - v_\alpha \vert  > \frac{\epsilon'}{2}} \le  2 \exp \left(  -\frac{nc'\epsilon'^2}{4}   \right),\label{eq:term2bd}
\end{align}
where the first inequality follows by using the fact that $\vert \hat{F}_n (v_\alpha) - F(v_\alpha) \vert \le 2$, since the empirical/true distributions are bounded above by $1$. The final inequality above uses the VaR concentration result from Lemma \ref{prop:var-concentration-bound1}.
Thus,
\begin{align}
 \prob{\e_n > \epsilon} &\leq   4 \exp \left(  -n(1-\alpha)^2c_3\epsilon^2   \right),\label{eq:en-bd}
\end{align}
for a distribution dependent constant $c_3.$

Next, using \eqref{eq:asd1}, the estimation error $\hat{c}_{n, \alpha} - c_{\alpha}$ can be written as
\begin{align*}
&\hat{c}_{n, \alpha} - c_{\alpha} = I_1 + e_n, \quad \textrm{where } I_1 = \frac{1}{1 - \alpha} \left[ \frac{1}{n} \sum_{i=1}^n \left( X_i - v_{\alpha} \right)^+ - \E \left[ \left( X - v_{\alpha}\right)^+\right]  \right].
\end{align*}
For bounding the $I_1$ term on the RHS above, we use the fact that $\left(X-v_{\alpha}\right)^+$ is a light-tailed r.v. This can be argued as follows: Letting $\mu_\alpha^+ = \E \left[ \left( X - v_{\alpha}\right)^+\right]$,
\begin{align*}
&\prob{ \left( X_i - v_{\alpha} \right)^+ - \mu_\alpha^+ > \epsilon} =  \prob{X > v_{\alpha} + \mu_\alpha^+ +  \epsilon} \le c_1 \exp\left(-c_2 \left(v_\alpha + \epsilon\right)\right) \le c_1 \exp\left(-c_4 \epsilon\right),
\end{align*}
where $c_1, c_2,$ and $c_4$ are distribution-dependent constants. 
Next, using the fact that $X$ is light-tailed, we have
\begin{align*}
\E\left[ \exp\left[\lambda\left(\left( X - v_{\alpha} \right)^+ - \mu_\alpha^+\right)\right]  \right] \le  1 + \frac{\lambda^2 \E X^2}{2} + \frac{\lambda^2 v_\alpha^2}{2} + o\left(\lambda^2\right). 
\end{align*}
In the above, we have used the fact that $\E\left[\left( X - v_{\alpha} \right)^2 \indic{X\ge v_\alpha} \right] \le \E X^2 + v_\alpha^2$. 
Comparing with the following identity: 
\[\exp\left(\frac{\lambda^2\sigma^2}{2}\right) = 1 + \frac{\lambda^2 \sigma^2}{2} + \frac{\lambda^2 v_\alpha^2}{2} + o\left(\lambda^2\right),\]
  it is easy to see that $\left( X - v_{\alpha} \right)^+$ is a light-tailed r.v. with parameters $(\sigma^2+v_\alpha^2,b)$, whenever $X$ is light-tailed with parameters $(\sigma^2,b)$ (see \eqref{eq:subexp-equiv}). 

Using a standard light-tailed concentration result (cf. Theorem 2.2. in \cite{wainwright2019high}), we obtain
\begin{align}
&\prob{\left|I_1\right| > \epsilon} \le 
\left\{\begin{array}{c}2\exp{\left(-\frac{n\epsilon^2(1-\alpha)^2}{2(\sigma^2+v_\alpha^2)}\right)},\ 0\leq\epsilon \leq \frac{\sigma^2+v_\alpha^2}{b(1-\alpha)},\\
2\exp\left(-\frac{n\epsilon(1-\alpha)}{2b}\right),\ \epsilon > \frac{\sigma^2+v_\alpha^2}{b(1-\alpha)}, \end{array}\right.
\label{eq:I1-bd}
\end{align}
The main claim follows by using 
\begin{align*}
\prob{ \left| 
\hat c_{n,\alpha} - c_\alpha \right| > \epsilon} \le  \prob{ \left| 
I_1 \right| > \frac{\epsilon}{2}} +  \prob{\e_n > \frac{\epsilon}{2}},
\end{align*}
and substituting the bounds obtained in \eqref{eq:en-bd} and \eqref{eq:I1-bd} in the RHS above.
\end{proof}

\subsection{Proof of Theorem~\ref{thm:cvar-sr-sub-gaussian}}
\begin{proof}
We begin the proof by rewriting the CVaR concentration bound present in Theorem~\ref{thm:cvar-concentration-sub-exp} in a simplified manner as follows:
\begin{align}
\label{eq:cvar-concentration-sub-exp-simplified}
\mathbb{P} \left[ \vert \hat{c}_{n, \alpha} - c_\alpha \vert > \epsilon \right] \leq 8 \exp \left[ -n (1-\alpha)\min \lbrace \epsilon, \epsilon^2 \rbrace G \right],	
\end{align} 
where $G = \min \lbrace \frac{c(1-\alpha)}{2(\sigma^2 + v^2_\alpha)}, \frac{1}{4b}, c(1-\alpha) \rbrace.$\\[1ex]
Note that, if the CVaR-SR algorithm has eliminated the optimal arm in phase $i$ then it implies that at least one of the last $i$ worst arms \emph{i.e.,} one of the arms in $\lbrace [K], [K-1], \cdots, [K-i+1] \rbrace$ must not have been eliminated in phase~$i.$ Hence, we obtain
\begin{align}
& \prob{J_{n} \neq i^*} \leq \sum_{k=1}^{K-1} \sum_{i=K+1-k}^K \prob{\hat{c}^{i^*}_{{n_k}, \alpha} \geq \hat{c}^{[i]}_{{n_k}, \alpha}} \nonumber\\
& = \sum_{k=1}^{K-1}\!\sum_{i=K+1-k}^K\prob{\hat{c}^{i^*}_{{n_k}, \!\alpha} - c^{i^*}_\alpha - \hat{c}^{[i]}_{{n_k}, \alpha} +  c^{[i]}_\alpha  \geq c^{[i]}_\alpha - c^{i^*}_\alpha} \nonumber\\
& \leq \sum_{k=1}^{K-1} \sum_{i=K+1-k}^K \prob{\hat{c}^{i^*}_{{n_k }, \alpha} - c^{i^*}_\alpha \geq \frac{\Delta_{[i]}}{2}} +  
 \sum_{k=1}^{K-1} \sum_{i=K+1-k}^K \prob{c^{[i]}_\alpha - \hat{c}^{[i]}_{{n_k}, \alpha} \geq \frac{\Delta_{[i]}}{2}}
\label{eq:Jn1}
\end{align}
We now bound the above terms individually as follows.
\begin{align}
& \sum_{k=1}^{K-1} \sum_{i=K+1-k}^K \prob{c^{[i]}_\alpha - \hat{c}^{[i]}_{{n_k}, \alpha} \geq \frac{\Delta_{[i]}}{2}} 
 \leq \sum_{k=1}^{K-1} \sum_{i=K+1-k}^K \prob{\vert \hat{c}^{[i]}_{{n_k}, \alpha} - c^{[i]}_\alpha \vert \geq \frac{\Delta_{[i]}}{2}} \nonumber \\  
&\qquad\qquad\qquad\overset{(a)}{\leq}\!\sum_{k=1}^{K-1}\!\sum_{i=K+1-k}^K 8 \exp \left( -n(1-\alpha) \min \lbrace \frac{\Delta_{[i]}}{2}, \frac{\Delta_{[i]}^2}{4} \rbrace G_{[i]} \right) \nonumber \\
&\qquad\qquad\qquad \leq\!\sum_{k=1}^{K-1}\!\sum_{i=K+1-k}^K \!8\!\exp \left( -n(1-\alpha) \min \lbrace \frac{\Delta_{[i]}}{2}, \frac{\Delta_{[i]}^2}{4} \rbrace G_{\max} \right), \nonumber \\
&\qquad\qquad\qquad \leq \sum_{k=1}^{K-1} 8k \exp \left( -n(1-\alpha) \min \lbrace \frac{\Delta_{[K+1-k]}}{2}, \frac{\Delta_{[K+1-k]}^2}{4} \rbrace\times G_{\max} \right), \label{eq:Jn2}
\end{align}
where $(a)$ is due to Theorem~\ref{thm:cvar-concentration-sub-exp} and\eqref{eq:cvar-concentration-sub-exp-simplified}, and $G_{\max} = \max_i G_i$. Further, note that
\begin{align*}
n \min \lbrace \frac{\Delta_{[K+1-k]}}{2}, \frac{\Delta_{[K+1-k]}^2}{4} \rbrace \geq \frac{n-K}{H\overline{\log}K},
\end{align*}
where $H$ is as defined in the theorem statement. By substituting the above in~\eqref{eq:Jn2},  we obtain
\begin{align}
\label{eq:dfg1}
&\sum_{k=1}^{K-1} \sum_{i=K+1-k}^K \prob{c^{[i]}_\alpha - \hat{c}^{[i]}_{{n_k}, \alpha} \geq \frac{\Delta_{[i]}}{2}}  \leq  \sum_{k = 1}^{K-1} 8k \exp \left( -\frac{(n-K)(1-\alpha) G_{\max}}{H\overline{\log}K} \right).
\end{align}
Similarly, we can show that
\begin{align}
\label{eq:dfg2}
& \sum_{k=1}^{K-1} \sum_{i=K+1-k}^K \prob{\hat{c}^{i^*}_{{n_k}, \alpha} - c^{i^*}_\alpha - \geq \frac{\Delta_{[i]}}{2}}  \leq  \sum_{k = 1}^{K-1} 8k \exp \left( -\frac{(n-K)(1-\alpha) G_{\max}}{H\overline{\log}K} \right).
\end{align}
The main claim follows by substituting~\eqref{eq:dfg1} and~\eqref{eq:dfg2} in~\eqref{eq:Jn1}.
\end{proof}

\subsection{Proof of Theorem \ref{thm:cvar-concentration-bounded-moment}}
\begin{proof}
Notice that
 \begin{align}
\hat{c}_{n, \alpha} &= \hat{v}_{n,\alpha} +\frac{1}{n(1-\alpha)} \sum_{i=1}^n \left( X_i - \hat{v}_{n, \alpha} \right)\indic{\hat v_{n,\alpha} \le X_i \le B_i} \nonumber\\
& = v_\alpha + \frac{1}{n(1-\alpha)} \sum_{i=1}^n \left( X_i - v_\alpha \right)\indic{v_\alpha \le X_i \le B_i} + \e_n, \textrm{ where }\label{eq:asd1} 
\end{align}
\begin{align*}
\e_n &= \left( \hat{v}_{n, \alpha} - v_\alpha \right) + 
\frac{1}{n(1-\alpha)} \sum_{i=1}^n  \left( X_i - \hat{v}_{n, \alpha} \right)\left[ \indic{\hat{v}_{n, \alpha} \le X_i \le B_i} - \indic{v_\alpha \le X_i \le B_i} \right]\\
&= \left( \hat{v}_{n, \alpha} - v_\alpha \right) 
+ \frac{1}{n(1-\alpha)} \sum_{i=1}^n  \left( v_{\alpha} - \hat{v}_{n, \alpha} \right)\indic{\hat{v}_{n, \alpha} \le X_i \le B_i}  \\
&\qquad+ \frac{1}{n(1-\alpha)} \sum_{i=1}^n\left( X_i - v_{\alpha} \right)\left[ \indic{\hat{v}_{n, \alpha} \le X_i \le B_i} - \indic{v_\alpha \le X_i \le B_i} \right] \\
&= \left( \hat{v}_{n, \alpha} - v_\alpha \right) 
+ \frac{\left( v_\alpha- \hat{v}_{n, \alpha}   \right)}{(1-\alpha)} \left( \hat F_n(B_i) - \hat F_n( \hat{v}_{n, \alpha}) \right)\\
&\qquad+ \frac{1}{n(1-\alpha)} \sum_{i=1}^n\left( X_i - v_{\alpha} \right)\left[ \indic{\hat{v}_{n, \alpha} \le X_i \le B_i} - \indic{v_\alpha \le X_i \le B_i} \right] 
\end{align*}
Thus,
\begin{align*}
\left|\e_n\right| &\le
\frac{\vert v_\alpha - \hat{v}_{n, \alpha} \vert}{1-\alpha} \vert \alpha - \hat{F}_n (\hat{v}_{n, \alpha}) \vert  + \frac{\vert v_\alpha - \hat{v}_{n, \alpha} \vert}{1-\alpha} \vert \hat{F}_n(v_\alpha) - \hat{F}_n (\hat{v}_{n, \alpha}) \vert\\
& \leq \frac{\vert v_\alpha - \hat{v}_{n, \alpha} \vert}{1-\alpha} \Big[ 2 \vert \hat{F}_n (\hat{v}_{n, \alpha}) - F(v_\alpha) \vert 
+ \vert \hat{F}_n(v_\alpha) - F(v_\alpha) \vert \Big].\stepcounter{equation}\tag{\theequation}\label{eq:notsosimple}
\end{align*}
Using $\vert \hat{F}_{n} (\hat{v}_{n,\alpha}) - F(v_\alpha) \vert \leq 1/n$, we obtain
\begin{align*}
 \prob{\e_n > \epsilon} &\leq   \prob{ \frac{2}{n} \frac{1}{(1-\alpha)} \vert \hat{v}_{n,\alpha} - v_\alpha \vert > \frac{\epsilon}{2} }  + \prob{\frac{1}{1-\alpha} \vert \hat{v}_{n,\alpha} - v_\alpha \vert \vert \hat{F}_n (v_\alpha) - F(v_\alpha) \vert > \frac{\epsilon}{2}}\\
&\le 2 \exp \left(  -nc_1 (1-\alpha)^2\epsilon^2   \right) + 2 \exp \left(  -n(1-\alpha)^2c_2\epsilon^2   \right),
\end{align*}
where the final inequality uses the concentration result in Lemma \ref{prop:var-concentration-bound1} to obtain the first term, while the second term can be arrived at as in the proof of Theorem \ref{thm:cvar-concentration-sub-exp}. In particular, letting $\epsilon' = \frac{(1-\alpha)\epsilon}{2}$, and using \eqref{eq:term2bd}, we have
\begin{align*}
 &\prob{\vert \hat{v}_{n,\alpha} - v_\alpha \vert \vert \hat{F}_n (v_\alpha) - F(v_\alpha) \vert > \epsilon'} \le 2 \exp \left(  -\frac{nc'\epsilon'^2}{4}   \right) ,
\end{align*}
where the final inequality follows by using DKW inequality for the first term, and VaR concentration result from Lemma \ref{prop:var-concentration-bound1} for the second term, together with the fact that $\hat{F}_n (v_\alpha) \le 1$.
Thus,
\begin{align}
 \prob{\e_n > \epsilon} &\leq   4 \exp \left(  -n(1-\alpha)^2c_3\epsilon^2   \right),
\textrm{ or, equivalently, }
 \e_n \le \sqrt{\frac{\log(4/\delta)}{c_3 n}} \textrm{ w.p. }(1-\delta). \label{eq:en-hpb} 
\end{align}
Hence, we have
\begin{align*}
c_{\alpha} - \hat{c}_{n, \alpha}  &= \frac{1}{1 - \alpha} \left[ \E \left[ \left( X - v_{\alpha}\right) \indic{ v_{\alpha} \le X } \right]  - \frac{1}{n} \sum_{i=1}^n \left( X_i - v_{\alpha} \right)\indic{ v_{\alpha} \le X_i \le B_i}\right] + \e_n\\
& = I_1 - I_2 + e_n,
\end{align*}
where $I_1 = \frac{1}{1 - \alpha} \E \left[ X \  \indic{ v_{\alpha} \le X } \right]  - \frac{1}{n(1-\alpha)} \sum_{i=1}^n  X_i\ \indic{ v_{\alpha} \le X_i \le B_i}$, and\\
$I_2 = \frac{1}{1 - \alpha} \E \left[ v_\alpha \  \indic{ v_{\alpha} \le X } \right]  - \frac{1}{n(1-\alpha)} \sum_{i=1}^n  v_\alpha\ \indic{ v_{\alpha} \le X_i \le B_i}$.
We bound the $I_1$ term, using a technique from \cite{bubeck2013bandits}, as follows: 
\begin{align*}
 &\frac{1}{1 - \alpha} \E \left[ X \  \indic{ v_{\alpha} \le X } \right]  - \frac{1}{n(1-\alpha)} \sum_{i=1}^n  X_i\ \indic{ v_{\alpha} \le X_i \le B_i}\\
 & = \frac{1}{n(1-\alpha)} \left(\sum_{i=1}^n \E \left[X\ \indic{ X > B_i } \right] + 
   \sum_{i=1}^n  \E \left[  X \  \indic{ v_\alpha \le X \le B_i } \right] - X_i \ \indic{ v_{\alpha} \le X_i \le B_i}\right) \stepcounter{equation}\tag{\theequation}\label{eq:b1}
\\
&  \le \frac{1}{n(1-\alpha)} \sum_{i=1}^n \frac{u}{B_i^{p-1}} +  \frac{1}{(1-\alpha)} \sqrt{\frac{2 B_n^{2-p}u \log(1/\delta)}{n}} + \frac{1}{(1-\alpha)} \frac{2 B_n \log(1/\delta)}{3n}, \textrm{ holds w.p. } (1-\delta),
\end{align*}
where we have used the fact that $\E( X^p) \ge B^{p-1} \E \left[X\ \indic{ X > B } \right]$ to handle the first term in \eqref{eq:b1}, and  Bernstein's inequality to bound the second term there.

Along similar lines, the term $I_2$ is bounded as follows:
\begin{align*}
 &\frac{1}{1 - \alpha} \E \left[ v_\alpha \  \indic{ v_{\alpha} \le X } \right]  - \frac{1}{n(1-\alpha)} \sum_{i=1}^n  v_\alpha\ \indic{ v_{\alpha} \le X_i \le B_i}\\
 & = \frac{v_\alpha}{n(1-\alpha)} \sum_{i=1}^n \E \left[\indic{ X > B_i } \right] + 
   \frac{v_\alpha}{n(1-\alpha)} \sum_{i=1}^n \bigg( \E \left[  \indic{ v_\alpha \le X \le B_i } \right] -  \indic{ v_{\alpha} \le X_i \le B_i}\bigg) \stepcounter{equation}\tag{\theequation}\label{eq:b2}
\\
&  \le \frac{1}{n(1-\alpha)} \sum_{i=1}^n \frac{u}{B_i^{p-1}} +  \frac{v_\alpha}{(1-\alpha)} \sqrt{\frac{\log(1/\delta)}{2n}} , \textrm{ holds w.p. } (1-\delta),
\end{align*}
where we have used Hoeffding's inequality, and $B_i^p \ge B_i^{p-1}$for bounding the second term\footnote{Note that for a fixed $\delta,$ we can assume $B_i>1$ for all $i$ by taking a $u$ large enough.}  in \eqref{eq:b2}, while the first term  is bounded using an argument similar to that used in bounding $I_1$ term above.

Using $B_i =  \left(\frac{u i}{\log(1/\delta)}\right)^{1/p}$, we have, w.p. $(1-\delta)$, 
\begin{align*}
 I_1 &\le \frac{4 u^{1/p}}{(1-\alpha)}\left(\frac{\log(1/\delta)}{n} \right)^{1-1/p},  \textrm{ and }
 & I_2 \le \frac{ u^{1/p}}{(1-\alpha)}\left(\frac{\log(1/\delta)}{n} \right)^{1-1/p}  +  \sqrt{\frac{\log(1/\delta)}{c_4 n}}.
\end{align*}
Combining the bound above, with that in \eqref{eq:en-hpb}, we obtain
\begin{align}
c_{\alpha} - \hat{c}_{n, \alpha} &\le 
  \frac{5 u^{1/p}}{(1-\alpha)}\left(\frac{\log(1/\delta)}{n} \right)^{1-1/p} +  \sqrt{\frac{\log(4/\delta)}{c_5 n}}\nonumber\\
  &\le \frac{5 u^{1/p}}{(1-\alpha)}\max\left(\log(4/\delta)^{1-1/p}, \log(4/\delta)^{1/2} \right) \frac{1}{n^{1-1/p}}, \textrm{ for } 1<p\le 2. \label{eq:s123}
\end{align}

If the second moment is bounded, i.e.,  $p=2$, we have
\begin{align*}
\prob{c_{\alpha} - \hat{c}_{n, \alpha} > \epsilon}  &\le 4\exp\left(-  cn (1-\alpha)^2 \epsilon^2\right),
\end{align*}
where $c$ is a distribution-dependent constant. Along similar lines, a concentration bound for the other tail can be obtained.
Thus, we have
\[\prob{\left| \hat{c}_{n, \alpha} - c_{\alpha}\right| > \epsilon}  \le 8\exp\left(-  cn (1-\alpha)^2 \epsilon^2\right).\]

Similarly, from \eqref{eq:s123}, for the case when $p \in (1,2)$, we obtain
\begin{align*}
\prob{ \left| 
\hat c_{n,\alpha} - c_\alpha \right| > \epsilon} \le 8\exp\left(-  c'n (1-\alpha)^{\frac{p}{(p-1)}} \epsilon^{\frac{p}{(p-1)}}\right),
\end{align*} 
where $c'$ is a distribution-dependent constant.

\end{proof}

\section{Concluding  Remarks}
\label{sec:conclusions}
We derived concentration bounds for CVaR estimation, separately considering light-tailed and heavy-tailed distributions.  For light-tailed distributions, our concentration bound uses a classical CVaR estimator based on the empirical distribution.  For the heavy-tailed case, we employ a truncation based CVaR estimator, and derive a concentration result under a mild bounded-moment assumption. Our concentration bound enjoys exponential decay in the sample size even for heavy-tailed random variables. We highlighted the applicability of the CVaR concentration result by considering a risk-aware best bandit arm selection problem. We proposed an adaptation of the successive rejects algorithm to the setting where the goal is to find an arm with the lowest CVaR. Using the CVaR concentration bound, we established error bounds for the proposed algorithm.

\bibliographystyle{amsplain}
\bibliography{references}


\end{document}